\setlist[itemize]{noitemsep}
\newtheorem{lemma}{Lemma}
\algnewcommand{\algorithmicvariables}{\textbf{Declare}}
\theoremstyle{definition}
\newtheorem{definition}{Definition}[section]
\theoremstyle{plain}
\newcommand{\NA}{\cellcolor{black!8}\textsc{N/A}}
\begin{document}

\title{Beyond Single Pass, Looping Through Time: KG-IRAG with Iterative Knowledge Retrieval}

\author{Ruiyi Yang}
\affiliation{%
  \institution{University of New South Wales}
  \city{Sydney}
  \state{NSW}
  \country{Australia}
}
\email{ruiyi.yang@student.unsw.edu.au}

\author{Hao Xue}
\affiliation{%
  \institution{University of New South Wales}
  \city{Sydney}
  \state{NSW}
  \country{Australia}
}
\email{hao.xue1@unsw.edu.au}

\author{Imran Razzak}
\affiliation{%
  \institution{Mohamed Bin Zayed University of Artificial Intelligence}
  \city{Abu Dhabi}
  \country{UAE}
}
\email{imran.razzak@mbzuai.ac.ae}

\author{Hakim Hacid}
\affiliation{%
  \institution{Technology Innovation Institute}
  \city{Abu Dhabi}
  \country{UAE}
}
\email{hakim.hacid@tii.ae}

\author{Flora D. Salim}
\affiliation{%
  \institution{University of New South Wales}
  \city{Sydney}
  \state{NSW}
  \country{Australia}
}
\email{flora.salim@unsw.edu.au}

\renewcommand{\shortauthors}{Trovato et al.}

\begin{abstract}

Retrieval-augmented generation (RAG) has improved large language models (LLMs) on knowledge-intensive tasks, yet most systems assume static facts and struggle when answers depend on serialized and dynamic data, like time--e.g., ordering events, aligning facts to valid intervals, or planning actions under evolving conditions. This paper presents \textbf{K}nowledge-\textbf{G}raph \textbf{I}terative \textbf{R}etrieval-\textbf{A}ugmented \textbf{G}eneration (\textbf{KG-iRAG}), a framework specialized for \textbf{temporal reasoning}. KG-iRAG couples a time-aware planner with a knowledge graph (KG) to iteratively fetch and compose evidence along a temporal axis. Concretely, it (i) represents events and facts with explicit timestamps and validity intervals; (ii) propagates temporal constraints through iterative retrieval using operators; and (iii) verifies temporal consistency while refining intermediate hypotheses, enabling step-by-step deduction for queries that mix knowledge retrieval with inference. Across public temporal QA benchmarks, KG-iRAG consistently improves accuracy and calibration over strong RAG baselines while reducing unnecessary retrieval through targeted, constraint-guided steps. To stress-test real-time decision queries, three application-oriented datasets (\textit{weatherQA-Irish},~\textit{weatherQA-Sydney}, and~\textit{trafficQA-TFNSW}) are additionally constructed and tested alongside existing temporal benchmarks. The results demonstrate that injecting temporal structure into KG-driven RAG yields robust gains on multi-step, time-dependent queries, advancing the state of temporal reasoning with LLMs.
\end{abstract}

\begin{CCSXML}
<ccs2012>
    <concept>
        <concept_id>10010147.10010178.10010179.10010182</concept_id>
        <concept_desc>Computing methodologies~Natural language generation</concept_desc>
        <concept_significance>500</concept_significance>
        </concept>
    <concept>
        <concept_id>10010147.10010178.10010179.10003352</concept_id>
        <concept_desc>Computing methodologies~Information extraction</concept_desc>
        <concept_significance>500</concept_significance>
    </concept>
</ccs2012>
\end{CCSXML}

\ccsdesc[500]{Computing methodologies~Natural language generation}
\ccsdesc[500]{Computing methodologies~Information extraction}

\keywords{Retrieval Augmented Generation, Knowledge Graph, Multi-agent, Temporal Reasoning}

\begin{teaserfigure}
\centering         
\includegraphics[width=0.95\textwidth]{./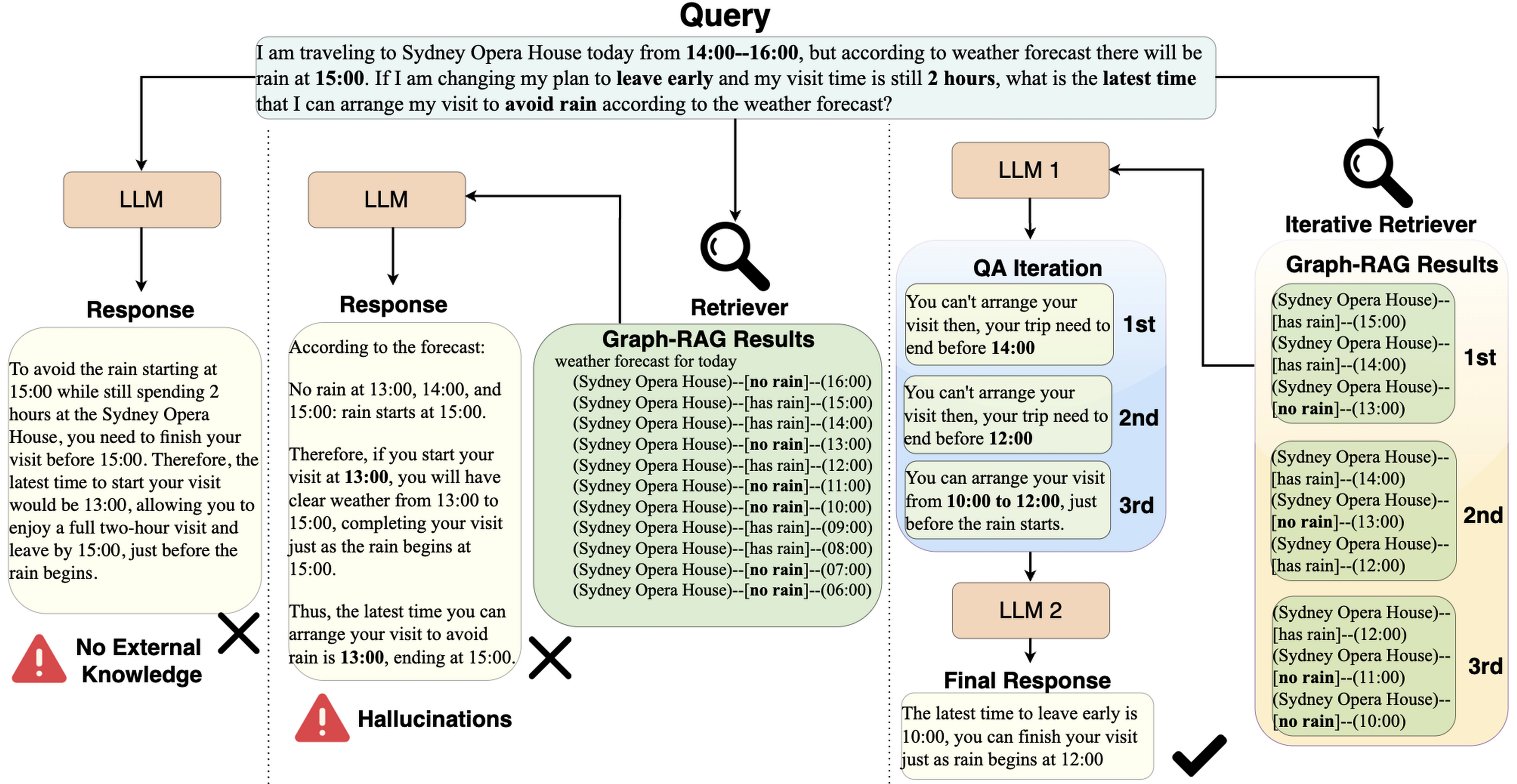}
\caption{A comparison of LLM performance: Direct LLM output, RAG, and Iterative RAG. Without external knowledge, LLMs generate responses solely based on the provided context. However, when presented with excessive data retrieved by RAG (especially numerical information), LLMs are prone to generating hallucinated content.}
\label{KG-iRAG example}
\Description{Banner for KG-iRAG.}
\end{teaserfigure}



\maketitle

\section{Introduction}
While State-of-the-art LLMs still suffered from limitations like fabricating fact or giving outdated answers for questions without answer or require latest data~\cite{kandpal2023large, zhang2023siren}, recent advances in Retrieval-Augmented Generation (RAG) offered a solution by integrating external knowledge sources like KGs with large language models (LLMs), the reliable and flexible supplementary knowledge helps LLMs generate more accurate replies to those queries that require domain-specific information~\citet{zhang2024raft, siriwardhana2023improving}, without extra time for fine-tuning. As a variant of RAG, GraphRAG utilizes KGs to enhance the retrieval process through relational data.  KGs capture intricate relationships between entities, allowing for more contextually rich predictions, thus offering more precise and contextually relevant responses from LLMs~\citet{wu2024medical, matsumoto2024kragen, edge2024local}. Although RAG achieves effectiveness in improving retrieval, they often fail to address the challenges posed in time-related tasks, such as planning trips or making decisions based on weather or traffic conditions. Dealing those queries, current RAG framework still faces several challenges:

$\mathbf{C_1}:\textbf{Accurate Retrieval}$ LLMs need to generate a clear data retrieving plan, avoiding both insufficient(wrong answer) and redundant(waste of token) data recall~\citet{cuconasu2024power}. 

$\mathbf{C_2}:\textbf{Hallucination}$ Even with correct extra data input, facing too many data in prompts, especially numbers, LLMs tend to generate hallucinations and not following correct facts~\cite{gao2023enabling}. 

$\mathbf{C_2}:\textbf{Context-aware Reasoning}$ LLMs need complex reasoning to analyze the data input, finding the accurate patterns behind numbers to get the final correct answers solving the temporal queries~\cite{huang2022towards}. 

Figure~\ref{banner} shows a scenario in which LLMs tend to generate hallucinations facing trip planning. To do such complex reasoning on huge amount of data input, existing RAG methods primarily focus on one-time knowledge retrieval rather than iterative retrieval. To address these limitations, this paper introduces a novel framework, \textbf{Knowledge Graph-Based Iterative Retrieval-Augmented Generation (KG-IRAG)}. KG-IRAG integrates KGs with an iterative reasoning process, enabling LLMs to solve problems by incrementally retrieving relevant data through multi-step queries and achieve step-by-step reasoning. The proposed KG-IRAG framework is validated through experiments on datasets containing queries involving temporal dependencies, demonstrating its ability to significantly enhance LLM performance. Specifically, our key contributions are:

\begin{itemize}
    \item \textbf{Iterative RAG Framework}: The new framework enables LLM reasoning in time-sensitive KG subgraphs through cyclic query refinement, offering a robust and efficient solution for addressing complex spatial temporal queries. The context-aware information retrieval via graph walks mirrors human-like reasoning patterns. 
    
    \item \textbf{New datasets targeting complex temporal queries}: \textit{weatherQA-Irish}, \textit{weatherQA-Sydney} and \textit{trafficQA-TFNSW} are designed to test LLM's ability to answer queries that require both \textbf{retrieving uncertain length of temporal information and doing mathematical reasoning}.
    
\end{itemize}

\section{Related Work}

\subsection{Combination of Graphs with LLMs}

The combinations of LLMs and graphs achieves mutual enhancement. The combination of LLMs with Graph Neural Networks (GNNs) has been shown to significantly improve the modeling capabilities of graph-structured data. LLMs have been shown to contribute to knowledge graph completion, aiding in downstream tasks such as node classification~\citet{chen2023label} and link prediction~\citet{shu2024knowledge}. Additionally, LLMs play a pivotal role in knowledge graph creation, transforming source texts into graphs~\citet{edge2024local, zhang2024causal}. LLMs also enhance performance in Knowledge Graph-based Question Answering (KBQA) tasks, which leverage external knowledge bases to answer user queries~\citet{cui2019kbqa, wu2019survey, fu2020survey}. KBQA tasks includes text understanding and fact-checking~\citet{chen2019tabfact, suresh2024overview}, and approaches are generally categorized into two types: Information Retrieval (IR)-based and Semantic Parsing (SP)-based. IR-based methods directly retrieve information from KG databases and use the returned knowledge to generate answers~\citet{jiang2022unikgqa,jiang2024kg}, whereas SP-based methods generate logical forms for queries, which are then used for knowledge retrieval~\citet{chakraborty2024multi, fang2024dara}. Advanced techniques such as Chain of Knowledge (CoK)~\citet{li2023chain}, G-Retriever~\citet{he2024g}, and Chain of Explorations~\citet{sanmartin2024kg} have been developed to enhance the precision and efficiency of data retrieval from KGs.

\subsection{Retrieval-Augmented Generation (RAG)}

RAG enhances the capabilities of LLMs by integrating external knowledge sources during the response generation process. Unlike traditional LLMs, which rely solely on their pre-trained knowledge, RAG enables models to access real-time or domain-specific information from external databases and knowledge sources. Recent studies have explored RAG from various perspectives, including the modalities of databases~\citet{zhao2024retrieval}, model architectures, training strategies~\citet{fan2024survey}, and the diverse applications of RAG~\citet{gao2023retrieval}. Effective evaluation of RAG systems requires attention to both the accuracy of knowledge retrieval and the quality of the generated responses~\citet{yu2024evaluation}.

Compared to traditional RAG systems, Graph Retrieval-Augmented Generation (GraphRAG) offers a distinct advantage by retrieving knowledge from graph databases, utilizing triplets as the primary data source~\citet{peng2024graph}. Graph-structured data capture relationships between entities and offer structural information, enabling LLMs to interpret external knowledge more effectively~\citet{hu2024grag, bustamante2024sparql, sanmartin2024kg}.

\subsection{LLMs Temporal Reasoning}

Recent advancements have increasingly focused on improving the temporal reasoning capabilities of LLMs. Temporal reasoning in natural language processing (NLP) typically falls into three categories: temporal expression detection and normalization~\citet{li2024sensorllm}, temporal relation extraction, and event forecasting~\citet{yuan2024back}. The integration of temporal graphs has enabled LLMs to perform more effectively in tasks such as time comparison~\citet{xiong2024large}. Several temporal QA datasets have been developed to test LLMs' temporal reasoning abilities, including TEMPLAMA~\citet{dhingra2022time,tan2023towards}, TemporalWiki~\citet{jang2022temporalwiki}, and time-sensitive QA datasets~\citet{chen2021dataset}. By combining LLMs with temporal KGs~\citet{lee2023temporal, yuan2024back, xia2024enhancing},  more accurate event forecasting has become possible.

While these methodologies have shown promising results, certain limitations remain: (1) few GraphRAG methods address queries highly dependent on temporal reasoning, and (2) no existing temporal QA dataset requires consecutive retrieval of uncertain amounts of data from a temporal knowledge base. The proposed KG-IRAG systemaims to bridge these gaps by introducing a novel \textbf{iterative GraphRAG method} and developing datasets specifically focused on \textbf{dynamic temporal information retrieval}.

\begin{figure} 
\centering         
\includegraphics[width=0.45\textwidth]{./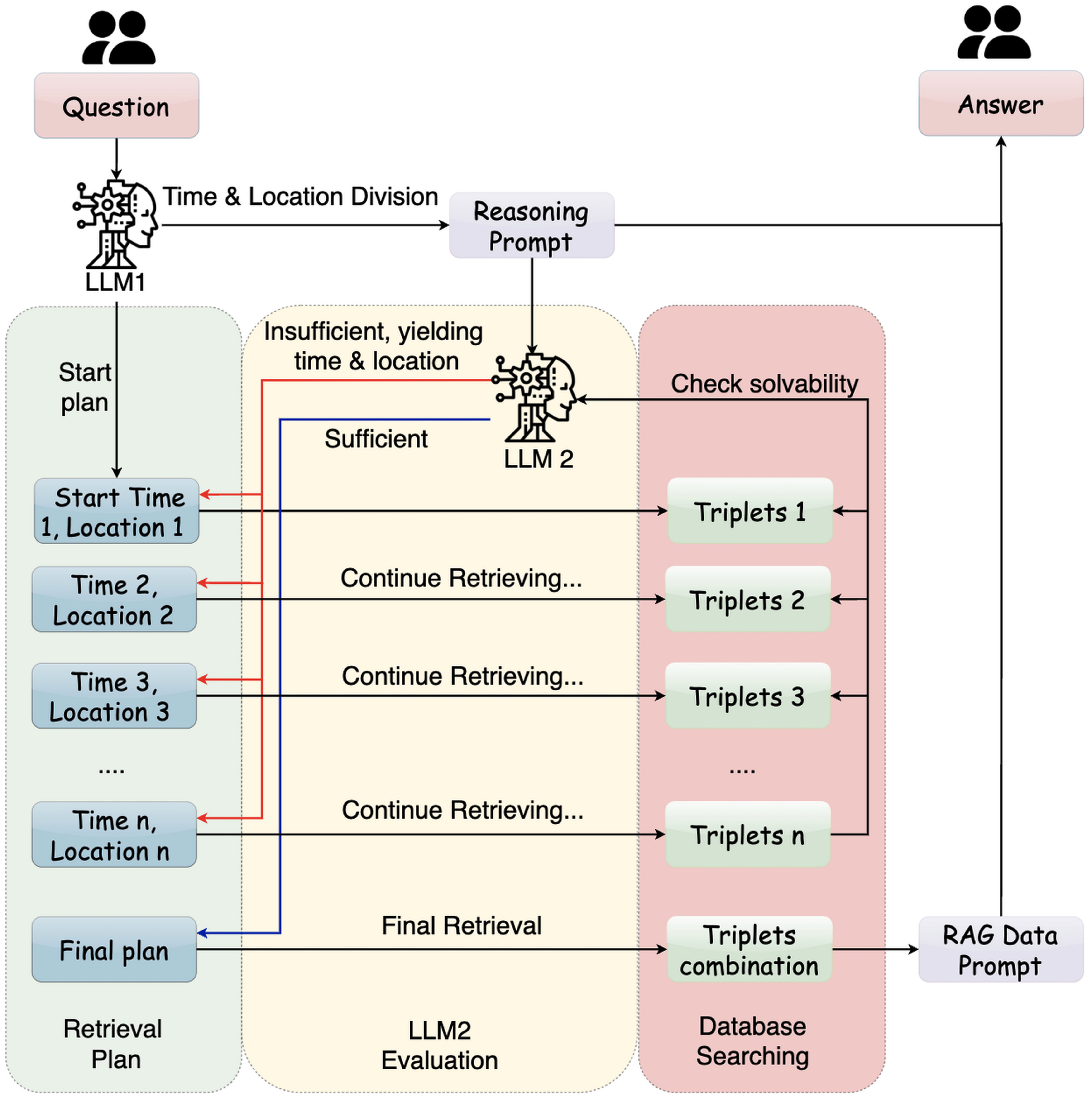}
\caption{The KG-IRAG framework: LLM1 generates an initial retrieval plan and a reasoning prompt, guiding LLM2 through iterative retrievals or stopping to generate the final answer.}
\label{KG-IRAG framework}
\end{figure}

\section{Beyond Single Pass, Looping Through Time: KG-IRAG Framework}

This section presents the KG-iRAG framework for temporal reasoning. The key idea is to \emph{anchor} retrieval on an explicit time point (or interval) and location, iteratively expand evidence along the temporal axis, and stop when a temporal-consistent answer can be verified. Two lightweight LLM agents collaborate throughout: \textbf{LLM1} proposes or updates the temporal/spatial anchors and the retrieval pattern; \textbf{LLM2} judges sufficiency and temporal consistency, and either requests another step or finalizes the answer.

\subsection{Setup and Iterative Overview}

Let $q$ be a natural-language query whose answer depends on time. A reasoning state at iteration $t$ is

\begin{equation}
S^{(t)} = \big(\tau^{(t)},\, \ell^{(t)},\, R^{(t)},\, p^{(t)}\big)   
\end{equation}

where $\tau^{(t)}$ is the current time anchor (a timestamp or interval), $\ell^{(t)}$ is the location context, $R^{(t)}$ is the set of retrieved KG triples at step $t$, and $p^{(t)}$ is a concise retrieval pattern (slot-filled from the query, e.g., entity types, relation hints, required temporal relation such as \textsc{before}/\textsc{during}/\textsc{after}).

Given a temporal KG $D$ (Sec.~\ref{sec:temporal-kg}), a time-aware retrieval operator $\Psi$ materializes evidence around the anchors:
\begin{equation}
\label{eq:retrieval}
\begin{aligned}
R^{(t)} 
&= \Psi\!\left(D;\, \tau^{(t)}, \ell^{(t)}, p^{(t)}\right) \\
&= \big\{\, h \in D \;\big| \;
\underbrace{\mathrm{time}(h)\in \mathcal{W}(\tau^{(t)})}_{\text{temporal window}} \wedge\;
\underbrace{\mathrm{dist}(\mathrm{loc}(h),\ell^{(t)})\le r}_{\text{spatial radius}} \\[-1mm]
&\hspace{3.2em}\wedge\;
\underbrace{\mathrm{match}(h,p^{(t)})}_{\text{pattern}}
\,\big\}.
\end{aligned}
\end{equation}

where $\mathcal{W}(\tau^{(t)})$ is a fixed or adaptive window around the anchor, $r$ is a spatial radius, and $\mathrm{match}$ enforces simple structural filters (e.g., relation type, entity role) and the required temporal relation implied by $p^{(t)}$.

LLM2 evaluates whether the accumulated evidence $R^{(\le t)}=\bigcup_{i=0}^{t}R^{(i)}$ suffices to answer $q$ and is temporally consistent. A simple, interpretable stop rule combining confidence and consistency is used:
\begin{equation}
\small
\label{eq:stop}
\mathrm{stop}^{(t)} \;=\; \Big[\underbrace{\mathrm{conf}\!\left(\mathrm{LLM2}(q, R^{(\le t)})\right)\ge \theta}_{\text{sufficiency}}\Big]\;\wedge\;
\Big[\underbrace{\mathrm{AllenConsistent}\!\left(R^{(\le t)}\right)}_{\text{interval constraints}}\Big],
\end{equation}
where $\mathrm{AllenConsistent}(\cdot)$ is a lightweight check that the time intervals implied by $R^{(\le t)}$ satisfy the expected \textsc{before/after/during/overlap} relations for $q$.

If $\mathrm{stop}^{(t)}=\text{false}$, LLM1 proposes the next anchors by scoring nearby time/location candidates using a utility that prefers \emph{new} temporal coverage with low redundancy:
\begin{equation}
\label{eq:update}
\begin{aligned}
(\tau^{(t+1)}, \ell^{(t+1)}, p^{(t+1)}) 
&= \arg\max_{(\tau', \ell') \in \mathcal{N}_\tau \times \mathcal{N}_\ell} 
Q(\tau', \ell'), \\[1mm]
Q(\tau', \ell') 
&= \underbrace{\mathrm{gain}\!\left(q, R^{(\le t)}, \tau', \ell'\right)
- \lambda\,\mathrm{overlap}\!\left(R^{(\le t)}, \tau', \ell'\right)}_{\text{LLM1-estimated utility}}.
\end{aligned}
\end{equation}

where $\mathcal{N}_\tau$/$\mathcal{N}_\ell$ are local temporal and spatial neighborhoods (e.g., adjacent time windows / nearby stations). This keeps the math minimal while making the policy explicit.

\begin{algorithm}[H]
\caption{KG-iRAG Time-Anchored Iterative Retrieval}
\label{alg:kgirag}
\begin{algorithmic}[1]
\Require Query $q$, temporal KG $D$, thresholds $(\theta, T_{\max})$
\Ensure Final answer $\textit{ans}$
\State $(\tau^{(0)},\ell^{(0)},p^{(0)}) \leftarrow \mathrm{LLM1\_Init}(q)$ \Comment{initial anchors \& pattern}
\For{$t=0$ to $T_{\max}$}
  \State $R^{(t)} \leftarrow \Psi\!\left(D;\tau^{(t)},\ell^{(t)},p^{(t)}\right)$ \Comment{Eq.~\ref{eq:retrieval}}
  \State $(\textit{ok},\, \widehat{\textit{ans}}) \leftarrow \mathrm{LLM2\_Judge}\!\left(q, R^{(\le t)}\right)$
  \If{$\textit{ok}$ \textbf{and} $\mathrm{AllenConsistent}\!\left(R^{(\le t)}\right)$} \Comment{Eq.~\ref{eq:stop}}
     \State \textbf{return} $\widehat{\textit{ans}}$
  \EndIf
  \State $(\tau^{(t+1)},\ell^{(t+1)},p^{(t+1)}) \leftarrow \mathrm{LLM1\_Update}\!\left(q, R^{(\le t)}\right)$ \Comment{Eq.~\ref{eq:update}}
\EndFor
\State \textbf{return} $\mathrm{LLM2\_Fallback}\!\left(q, R^{(\le T_{\max})}\right)$
\end{algorithmic}
\end{algorithm}

\noindent\textbf{Practical notes.} (i) $p^{(t)}$ is a short, human-readable template (entities/relations and required temporal predicate), kept stable unless LLM2 flags mismatch; (ii) $\mathcal{W}(\cdot)$ can shrink when confidence rises to avoid over-retrieval; (iii) $T_{\max}$ bounds latency; in practice $3$--$6$ steps suffice for our tasks.

\subsection{Temporal KG Construction}
\label{sec:temporal-kg}

Data is presented as a directed, typed multigraph $D=(V,E)$ with explicit time:
\begin{itemize}
\item \textbf{Nodes} $V = V_{\text{time}} \cup V_{\text{loc}} \cup V_{\text{event}} \cup V_{\text{value}}$,
where $V_{\text{time}}$ holds time \emph{points} or \emph{intervals} $[t_s,t_e]$, $V_{\text{loc}}$ are geo-entities (e.g., station/road), $V_{\text{event}}$ are occurrences (e.g., \texttt{rain}, \texttt{traffic}), and $V_{\text{value}}$ stores measured attributes (e.g. rain, vehicle count).
\item \textbf{Edges} $E \subseteq V\times \mathcal{R}\times V$ with small, interpretable relation set $\mathcal{R}$ containing \texttt{occursAt}, \texttt{atLocation},\texttt{hasValue}, \texttt{before}, \texttt{after}, \texttt{during}, \texttt{overlaps}, \texttt{near}.

Temporal relations follow Allen’s interval algebra; \texttt{near} is a symmetric spatial proximity.
\end{itemize}

Raw records $d_i$ are normalized by a deterministic mapper $\Gamma$:
\[
\Gamma(d_i)=\big\{(e_i,\texttt{occursAt},\tau_i),\ (e_i,\texttt{atLocation},\ell_i),\ (e_i,\texttt{hasValue},v_i)\big\}
\]
Storing validity intervals on $V_{\text{time}}$ keeps retrieval simple (window checks reduce to interval tests), while derived \texttt{before}/\texttt{during}/\texttt{overlaps} edges allow fast constraint filtering inside $\Psi$ (Eq.~\ref{eq:retrieval}). This design makes time first-class, enables local temporal expansion around anchors, and supports lightweight consistency checks (Eq.~\ref{eq:stop}) without heavy probabilistic machinery. Since application-oriented datasets (weather/traffic) are ingested by the same $\Gamma$; no dataset-specific logic is required in the reasoning loop.

\subsection{Triplet Retrieval} \label{method_TR}

Given the current state $S^{(t)}=(\tau^{(t)},\ell^{(t)},R^{(t)},p^{(t)})$, triplet retrieval instantiates the time-aware operator in Eq.~\ref{eq:retrieval}.
Intuitively, evidence is collected from a temporal window around $\tau^{(t)}$ and a spatial radius around $\ell^{(t)}$, while enforcing the structural/temporal pattern $p^{(t)}$.

\paragraph{Lightweight implementation.}
The retrieval window $\mathcal{W}(\tau^{(t)})$ and radius $r$ start small and grow only when needed.
To bound latency, the neighborhood depth is capped by $k_0\in\{1,2\}$ (local hops), with exact-dedup over previously seen triples to avoid re-fetch.
A simple priority order is used when multiple candidates satisfy the pattern: (i) exact time match $\Rightarrow$ (ii) within-window interval match $\Rightarrow$ (iii) nearest neighbor in time; ties are broken by spatial constrains on $\ell^{(t)}$.
This yields compact batches $R^{(t)}$ that favor temporally on-point evidence. For temporal KGs, each $h\in R^{(t)}$ typically contains
\texttt{(event, occursAt, time)} / \texttt{(event, atLocation, loc)} / \texttt{(event, hasValue, v)}
and optionally \texttt{before} \texttt{/after} \texttt{/during} \texttt{/overlaps} edges derived from validity intervals.
These derived edges permit fast pruning inside $\Psi$ without additional model calls.

\subsection{Iterative Reasoning} \label{method_IR}

After accumulating $R^{(\le t)}=\bigcup_{i=0}^{t}R^{(i)}$, LLM2 proposes a candidate answer and returns a sufficiency score together with a temporal-consistency verdict.
Stopping follows Eq.~\ref{eq:stop}: confidence must exceed $\theta$ \emph{and} the Allen-style interval constraints implied by $R^{(\le t)}$ must be satisfied.

If not sufficient, the controller updates anchors using Eq.~\ref{eq:update}.
In practice, the utility prefers (a) new time coverage adjacent to $\tau^{(t)}$, (b) nearby locations along the same line/road, and (c) patterns that fill the missing predicate needed by the query (e.g., switch from \textsc{during} to \textsc{before} if a latest-feasible-time is sought).
This induces a measured outward expansion in time and space, rather than a wide one-shot sweep.

\begin{algorithm}[H]
\caption{Temporal Loop Controller (LLM1 + LLM2)}
\label{alg:controller}
\begin{algorithmic}[1]
\Require Query $q$, KG $D$, thresholds $\theta$, $T_{\max}$
\State $(\tau^{(0)},\ell^{(0)},p^{(0)}) \leftarrow \mathrm{LLM1\_Init}(q)$
\For{$t=0$ to $T_{\max}$}
  \State $R^{(t)} \leftarrow \Psi\!\left(D;\tau^{(t)},\ell^{(t)},p^{(t)}\right)$
  \State $(ok,\widehat{ans}) \leftarrow \mathrm{LLM2\_Judge}\!\left(q, R^{(\le t)}\right)$
  \If{$ok$ \textbf{and} $\mathrm{AllenConsistent}\!\left(R^{(\le t)}\right)$}
     \State \textbf{return} $\widehat{ans}$
  \EndIf
  \State $(\tau^{(t+1)},\ell^{(t+1)},p^{(t+1)}) \leftarrow \mathrm{LLM1\_Update}\!\left(q, R^{(\le t)}\right)$
\EndFor
\State \textbf{return} \text{``no answer''} \Comment{budget reached}
\end{algorithmic}
\end{algorithm}

\noindent\textit{Interpretation.}
LLM2 acts as a temporal verifier: it accepts only when the proposed answer is both well-supported and interval-consistent.
LLM1 acts as a planner: it shifts anchors to reduce uncertainty with minimal new retrieval.

\subsection{Answer Generation} \label{method_Gen}

When the stop rule fires, the final answer is produced from the consolidated evidence $E=\mathrm{Dedup}\!\left(R^{(\le t)}\right)$ together with the query $q$.
A simple, transparent fusion is used: facts close to the \emph{decisive time} receive higher weight.

\begin{equation}
\label{eq:fuse}
\begin{aligned}
ans 
&= \mathrm{LLM2\_Synthesize}\!\left(q,\; \sum_{h\in E}\alpha(h)\cdot h\right), \\[1mm]
\alpha(h) 
&\propto \exp\!\big(-\gamma\cdot \mathrm{dist\_time}(h,t^\star)\big)
\cdot \mathbf{1}\!\{\mathrm{match}(h,p^{(\le t)})\}.
\end{aligned}
\end{equation}

where $t^\star$ is selected from $E$ as the time point/interval that satisfies the query’s temporal predicate (e.g., latest feasible departure), and $\gamma$ controls temporal decay (default $0.5$--$1.0$).
The synthesis call returns both the answer and a short natural-language rationale citing the highest-weight facts.
If no $t^\star$ can be identified or the retrieval budget $\kappa$ is exceeded, the systemreturns \text{``no answer''}.

\paragraph{Robustness to noisy evidence.}
Before Eq.~\ref{eq:fuse}, a contradiction filter removes pairs of facts that violate the accepted interval constraints for $q$ (e.g., mutually exclusive values at the same time).
This keeps the fused context compact and consistent.

\subsection{Why Not Single-Pass Retrieval?} \label{subsec:why-not-single}

Single-pass RAG must choose a large temporal window and spatial neighborhood \emph{a priori}.
In temporal KGs, the effective branching factor $\delta$ (neighbors per hop) compounds with the \emph{dependency depth} $d$ (how many anchored steps are needed to satisfy the temporal predicate), leading to a cost that scales as
\begin{equation}
\label{eq:cost}
\mathrm{Cost}_{\text{single}} \;\approx\; \Theta\!\big(\delta^{\,d}\big),
\end{equation}
or it risks missing critical intervals if the window is narrowed.
By contrast, KG-iRAG executes $T$ controlled steps with small local radius $k_0$ (typically $k_0\!\le\!2$), yielding
\[
\mathrm{Cost}_{\text{iter}} \;\approx\; T\cdot \Theta\!\big(\delta^{\,k_0}\big), \quad \text{with } k_0 \ll d.
\]
This reduces extraneous retrieval while preserving accuracy: anchors are moved only when evidence indicates where additional temporal coverage is most useful.
Empirically, $T\in[3,6]$ suffices for the studied tasks, avoiding the accuracy–efficiency dilemma inherent to single-pass sweeps.

\subsection{Case Study}

To further explain the details of ‘iteration‘ in KG-IRAG, a case study was conducted by simulating an initial travel plan to Sydney Opera House, including defining whether the initial plan is valid, as well as how to define an optimal time to adjust the trip. Considering the following Q1, Q2 and related data:

Q1: \textit{`I plan to visit the Sydney Opera House from 13:00 to 15:00 on December 5th. Considering the weather, can I avoid rain during the trip? `}

Q2: \textit{`If I can't avoid the rain, what is the earliest time I can postpone my trip to avoid rain? `}

\paragraph{Temporal series (Dec 5, Sydney Opera House).}
13{:}00 cloudy; \textbf{13{:}30 rain}; 14{:}00 cloudy; 14{:}30 cloudy; 15{:}00 cloudy; \textbf{15{:}30 rain}; \textbf{16{:}00 rain}; 16{:}30 cloudy; 17{:}00 cloudy; 17{:}30 cloudy; 18{:}00 cloudy; 18{:}30 cloudy.

\paragraph{Setup.}
Slot length $\Delta=1$h. The retrieval operator $\Psi$ (Eq.~\ref{eq:retrieval}) uses a temporal window aligned to the current anchor and enforces the pattern $p$ (\textsc{no-rain} for a contiguous window). LLM2 applies the stop rule (Eq.~\ref{eq:stop}); otherwise, anchors are updated by Eq.~\ref{eq:update} to expand coverage with minimal redundancy.

\medskip
\noindent\textbf{Q1 — Fixed window feasibility (13{:}00–15{:}00).}
Initialize $\tau^{(0)}=[13{:}00,15{:}00],\,\ell^{(0)}=\text{Opera House},\,p^{(0)}=\textsc{no-rain-in-window}$.
Applying $\Psi$ retrieves evidence $R^{(0)}$ that includes \textbf{13{:}30 rain}.
LLM2 judges insufficiency for \textsc{no-rain} and returns \textbf{Q1 = No} (fails temporal predicate); the process stops for Q1.

\medskip
\noindent\textbf{Q2 — Earliest postponement with a $2$h dry window.}
Now $p$ requires a contiguous $2$h \textsc{no-rain} interval with the earliest feasible start time $\ge$ 13{:}30.
Iteration proceeds as follows (updates choose the closest future window that increases temporal coverage while avoiding overlap):

\begin{table}[H]
\centering
\small
\begin{adjustbox}{max width=0.47\textwidth}
\begin{tabular}{c|c|l|l}
\toprule
Step $t$ & Anchor window $\tau^{(t)}$ & Key evidence from $\Psi$ & Decision (LLM2 / Controller) \\
\midrule
0 & $[14{:}00,16{:}00]$ & \textbf{15{:}30 rain}, \textbf{16{:}00 rain} & Violates \textsc{no-rain}; update by Eq.~\ref{eq:update} \\
1 & $[16{:}00,18{:}00]$ & \textbf{16{:}00 rain} & Still violates; shift start forward \\
2 & $[16{:}30,18{:}30]$ & 16{:}30–18{:}30 all cloudy & Satisfies pattern; \textbf{stop} (Eq.~\ref{eq:stop}) \\
\bottomrule
\end{tabular}
\end{adjustbox}
\end{table}

The earliest feasible postponement that yields a $2$h rain-free interval is therefore \textbf{16{:}30–18{:}30}.
The controller favors $[16{:}30,18{:}30]$ at Step~2 because it is the first window starting after the last observed rain time (16{:}00) that also meets the $2$h constraint without overlap-induced redundancy.

\paragraph{Answer synthesis.}
Upon stopping, evidence $E=\mathrm{Dedup}(R^{(\le t)})$ is fused by the temporal weighting in Eq.~\ref{eq:fuse}, where the decisive time $t^\star$ is the window start (16{:}30) satisfying the query predicate. The generated rationale cites the rain hits at 13{:}30/15{:}30/16{:}00 and the subsequent cloudy stretch 16{:}30–18{:}30.


\section{Dataset Properties}

\subsection{Data Properties}

Three application-oriented corpora are used to stress temporal reasoning under different sampling cadences:

\begin{itemize}\setlength{\itemsep}{2pt}
\item \textbf{weatherQA-Irish} (Jan 2017–Dec 2019): hourly records from Met Éireann, covering 25 stations across 15 counties.
\item \textbf{weatherQA-Sydney} (Jan 2022–Aug 2024): weather observations collected every 30 minutes in Sydney.
\item \textbf{trafficQA-TFNSW} (2015–2016): hourly traffic volumes from permanent counters/classifiers in Sydney.
\end{itemize}

Table~\ref{attribute-combined} lists the minimal attributes retained to construct temporal KGs and the corresponding QA labels. Time is treated as a first-class entity (points or short intervals) to simplify window checks and downstream temporal constraints. Table~\ref{quant-datasets} reports the corpus sizes after normalization.

\begin{table}[ht]
\centering
\caption{Corpus scale after normalization.}
\setlength{\tabcolsep}{6pt}
\begin{tabular}{cccc}
\toprule
\multicolumn{4}{c}{\textbf{Irish Weather}} \\
\midrule
period & entities & relations & records (raw) \\
\midrule
2017--2019 & 227{,}760 & 876{,}000 & 219{,}000 \\
\midrule
\multicolumn{4}{c}{\textbf{Sydney Weather}} \\
\midrule
period & entities & relations & records (raw) \\
\midrule
2022--2024 & 332{,}433 & 559{,}673 & 279{,}837 \\
\midrule
\multicolumn{4}{c}{\textbf{TFNSW Traffic}} \\
\midrule
period & entities & relations & records (raw) \\
\midrule
2015--2016 & 132{,}042 & 683{,}002 & 683{,}002 \\
\bottomrule
\end{tabular}
\label{quant-datasets}
\end{table}

“Records (raw)” counts the ingested rows prior to KG mapping and deduplication.
Weather corpora contain extra attributes (e.g., humidity, sunlight) and occasional duplicates; these are dropped, so relations can exceed raw rows.
The TFNSW corpus is already compact and fully utilized.

\subsection{Question–Answer Generation} \label{data:qa}

The QA design progressively increases temporal difficulty: \textbf{Q1} is a base detection on a fixed window; \textbf{Q2}/\textbf{Q3} require locating the nearest feasible window before/after an anchor time.
This section only defines labels and generation policy; the iterative reasoning mechanics are covered in the Method section.

\paragraph{Sampling anchors.}
For each KG \(D\), anchor times \(\mathcal{T}_{\text{anchor}}\) are sampled uniformly within the temporal coverage.
A trip duration \(\Delta t\) (e.g., 1--2\,h) and a maximum search horizon \(L\) (e.g., 9--12\,h) are fixed per dataset.

\paragraph{Q1: base detection on a fixed window.}
For anchor \(t\), define the \emph{event-existence indicator}
\[
I_{\mathrm{exist}}(t) \;=\; \mathbb{I}\!\left(\exists\, e \in D\ \text{within}\ [t,\, t+\Delta t]\right).
\]
Two phrasings are supported in the datasets:
(i) \emph{Detection} label \(a_{\mathrm{det}}(t)=I_{\mathrm{exist}}(t)\);
(ii) \emph{Avoidability} label \(a_{\mathrm{avoid}}(t)=\neg I_{\mathrm{exist}}(t)\).
In practice, Q2/Q3 are instantiated only when the planned window is \emph{not} avoidable, i.e., \(I_{\mathrm{exist}}(t)=1\).

\paragraph{Q2/Q3: nearest feasible window before/after the anchor.}
Given the same duration \(\Delta t\) and horizon \(L\),
\[
a_{\text{early}}(t) \;=\; \max \big\{ t' < t \;\big|\; \nexists\, e \in [t',\, t'+\Delta t],\ \ t' > t-L \big\},
\]
\[
a_{\text{late}}(t) \;=\; \min \big\{ t' > t \;\big|\; \nexists\, e \in [t',\, t'+\Delta t],\ \ t' < t+L \big\}.
\]
If no such \(t'\) exists within the horizon, the label is set to \textsc{None}.

\begin{algorithm}[H]
\caption{QA Pair Generation (per KG)}
\label{alg:qa_gen}
\begin{algorithmic}[1]
\Require KG $D$, \#anchors $M$, duration $\Delta t$, horizon $L$
\Ensure QA set $\mathcal{Q}$
\State $\mathcal{Q}\leftarrow\emptyset$, \; $\mathcal{T}_{\text{anchor}} \leftarrow \mathrm{UniformSample}(D.\mathrm{times}, M)$
\For{each $t \in \mathcal{T}_{\text{anchor}}$}
    \State add $(q_1(t), a_{\mathrm{det/avoid}}(t))$
    \If{$I_{\mathrm{exist}}(t)=1$}
        \State compute $a_{\text{early}}(t)$, $a_{\text{late}}(t)$ within $L$
        \State add $(q_2(t), a_{\text{early}}(t))$, $(q_3(t), a_{\text{late}}(t))$
    \EndIf
\EndFor
\end{algorithmic}
\end{algorithm}

\paragraph{Numbers.}
Table~\ref{num-datasets} summarizes the distribution of Q1 labels and the availability of Q2/Q3 answers.
For Q2/Q3, counts are reported only over anchors with \(I_{\mathrm{exist}}(t)=1\) (i.e., the planned window initially violates the constraint).

\begin{table}[ht]
\centering
\caption{QA set statistics. Q1 counts are over all anchors; Q2/Q3 counts are over anchors with \(I_{\mathrm{exist}}(t)=1\).}
\begin{adjustbox}{max width=0.5\textwidth}
\begin{tabular}{cccc}
\toprule
\multicolumn{4}{c}{\textbf{Q1: abnormal event over \([t,t{+}\Delta t]\)}} \\
\midrule
Dataset & \#items & True (event) & False (no event) \\
\midrule
weatherQA-Irish & 600 & 393 & 207 \\
weatherQA-Sydney & 600 & 211 & 389 \\
trafficQA-TFNSW & 400 & 253 & 147 \\
\midrule
\multicolumn{4}{c}{\textbf{Q2\&Q3 (conditioned on Q1=True)}} \\
\midrule
Dataset & seed cases & has answer & no answer \\
\midrule
weatherQA-Irish & 207 & 204 & 3 \\
weatherQA-Sydney & 389 & 339 & 50 \\
trafficQA-TFNSW & 147 & 135 & 12 \\
\bottomrule
\end{tabular}
\end{adjustbox}
\label{num-datasets}
\end{table}

\paragraph{Example (weatherQA–Sydney, 30-min cadence).}

\textit{Anchor.} Location path: [PARRAMATTA NORTH $\rightarrow$ HOLSWORTHY CONTROL RANGE];
start time: \texttt{11-Mar-2024 04:00}; trip duration $\Delta t = 4$h.

\begin{itemize}[leftmargin=1.1em]
\item \textbf{Q1 (avoidability).} 
``I am a resident in PARRAMATTA NORTH, and I plan to have a trip on 11-Mar-2024 04:00, passing ['PARRAMATTA NORTH', 'HOLSWORTHY CONTROL RANGE'] in 4 hours, considering the weather, can I avoid rain during this time?''
\\\textit{Gold:} \textbf{False} (rain at 04{:}00).

\item \textbf{Q2 (latest-before, leave early).}
``Based on the above situation, if I cannot avoid rain, I can leave early, what is the latest time for me to leave early. If I can avoid rain, just answer 'no need', if there is no such time, just answer 'no answer'.''
\\\textit{Gold:} \textbf{no answer} (no rain-free 4h window entirely before 04{:}00).

\item \textbf{Q3 (earliest-after, leave late).}
``Based on the above situation, if I cannot avoid rain, I can leave late until there is no rain, what is the earliest time for me to leave late? If I can avoid rain, just answer 'no need', if there is no such time, just answer 'no answer'.''
\\\textit{Gold:} \textbf{2024-03-11T04{:}30{:}00} (04{:}30–08{:}30 all ``No rain'').
\end{itemize}

\subsection{Dynamic ProblemDecomposition}

Q2/Q3 decompose a global temporal requirement (``find a feasible \(\Delta t\) window closest to the anchor'') into a sequence of fixed-window checks identical in structure to Q1.
This yields a uniform labeling rule across datasets while naturally exercising the time-anchored iteration in method:
anchors advance just beyond violating intervals, and feasibility is certified by simple interval tests.
The design avoids dataset-specific heuristics yet produces queries that require genuine temporal reasoning rather than single-shot lookup.

\section{Experiments}

\subsection{Evaluation}

\paragraph{Tasks and datasets.}
Evaluation covers the three application datasets in Sec.~\ref{data:qa} and one external temporal QA benchmark to show KG-iRAG's usability on other cases. As a KG-based temporal QA set built on Wikidata with explicit/implicit/ordinal temporal categories and standard splits, \textbf{TimeQuestions}~\cite{jia2021complex} (16{,}181 Qs) is adopted for out-of-domain benchmarking.%

\paragraph{Standard Data (minimal evidence).}
For the in-house datasets, each question is paired with a \emph{minimal} evidence set \(SD\) that suffices to answer it:
\begin{equation}\label{best-data}
SD \;=\; \arg\min_{D \subseteq D_{\mathrm{all}}}\ \text{s.t.\ $D$ solves the query},
\end{equation}
\begin{equation}\label{best-data2}
\forall d\in SD,\ \ (SD\setminus\{d\})\ \text{does not solve the query}.
\end{equation}
This ensures fair comparison across formats (raw table, verbalized text, KG triples) without rewarding extraneous context.

\paragraph{Metrics.}
Reporting uses \textbf{Accuracy (Acc)} and \textbf{macro-F1} for answer quality, and a Jaccard-style \textbf{Hit Rate (HR)} for retrieval usefulness (share of retrieved items that intersect $SD$). Formulas for EM/F1 are standard and omitted for brevity. Hallucination is explicitly measured (see below).

\paragraph{Hallucination.}
Hallucination denotes content not supported by the provided evidence (or violating the accepted temporal constraints). The indicator and rate are

\begin{equation} \label{Hallucination}
\small
\text{hal}(answer_i, truth_i) =
\begin{cases} 
1 & \text{hallucination detected} \\
0 & \text{otherwise}
\end{cases}
\end{equation}

\begin{equation} \label{Hal}
\small
  Hallucination(\%) = \frac{\Sigma_{i=0}^N hal(answer_i, truth_i)=1}{N} \\
\end{equation}

\begin{table*}[h]
\centering
\caption{Performance comparison of different methods on KGQA benchmarks with KG(Triplets) input. The \textbf{best} and \underline{second-best} methods are denoted.}
\label{table:baseline-results}
\begin{adjustbox}{max width=\textwidth}
\begin{tabular}{l|cccc|cccc|cccc|cc}
\toprule
\multicolumn{15}{c}{\textbf{Overall Results}} \\ \cline{1-15}
\multirow{2}{*}{Method} & \multicolumn{4}{c|}{WeatherQA-Irish} & \multicolumn{4}{c|}{WeatherQA-Sydney} & \multicolumn{4}{c|}{TrafficQA-TFNSW} & \multicolumn{2}{c}{TimeQuestion} \\
\cmidrule(lr){2-5} \cmidrule(lr){6-9} \cmidrule(lr){10-13} \cmidrule(lr){14-15} 
& Q2-Acc & Q2-F1 & Q3-Acc & Q3-F1 & Q2-Acc & Q2-F1 & Q3-Acc & Q3-F1 & Q2-Acc & Q2-F1 & Q3-Acc & Q3-F1 & Acc & F1 \\
\midrule

\midrule
\rowcolor{black!10}\multicolumn{15}{l}{\textbf{LLM methods (no KG retrieval)}} \\

Llama3-8b
& 0.009 & \NA & 0.048 & \NA
& 0.141 & \NA & 0.149 & \NA
& 0.061 & \NA & 0.087 & \NA
& 0.178 & \NA \\

GPT-3.5-turbo
& 0.034 & \NA & 0.034 & \NA
& 0.129 & \NA & 0.136 & \NA
& 0.054 & \NA & 0.095 & \NA
& 0.258 & \NA \\

GPT-4o-mini
& 0.014 & \NA & 0.058 & \NA
& 0.123 & \NA & 0.149 & \NA
& 0.087 & \NA & 0.102 & \NA
& 0.327 & \NA \\

GPT-4o
& 0.043 & \NA & 0.053 & \NA
& 0.126 & \NA & 0.141 & \NA
& 0.068 & \NA & 0.109& \NA
& 0.354 & \NA \\

Falcon-H1-7B-Instruct
& 0.048 & \NA & 0.072 & \NA
& 0.129 & \NA & 0.147 & \NA
& 0.075 & \NA & 0.116 & \NA
& 0.348 & \NA \\

\midrule
\rowcolor{black!10}\multicolumn{15}{l}{\textbf{KG+LLM methods}} \\

Graph-RAG
& 0.208 & 0.673 & 0.357 & 0.684
& 0.226 & 0.514 & 0.25 & 0.503
& 0.361 & 0.694 & 0.565 & 0.709
& 0.693 & 0.734 \\

KG-RAG~\cite{sanmartin2024kg}
& 0.435 & \underline{0.857} & 0.575 & 0.839
& 0.378 & 0.785 & 0.362 & 0.793
& \underline{0.537} & \underline{0.878} & 0.776 & 0.869
& 0.717 & 0.783 \\

ToG~\cite{sun2023think}
& \underline{0.449} & 0.842 & \underline{0.584} & \underline{0.857}
& \underline{0.411} & \underline{0.833} & \underline{0.398} & \underline{0.824}
& 0.531 & 0.861 & \underline{0.789} & \underline{0.893}
& 0.742 & 0.812 \\

\textbf{KG-iRAG} 
& \textbf{0.459} & \textbf{0.875} & \textbf{0.604} & \textbf{0.88}
& \textbf{0.429} & \textbf{0.852} & \textbf{0.411} & \textbf{0.869}
& \textbf{0.558} & \textbf{0.886} & \textbf{0.803} & \textbf{0.914}
& \textbf{0.781} & \textbf{0.824} \\
\midrule

\end{tabular}
\end{adjustbox}
\end{table*}

Operationally: for \textbf{Q1}, any incorrect yes/no constitutes hallucination; for \textbf{Q2}/\textbf{Q3}, hallucination is flagged if (a) the predicted time window is not in evidence, (b) the predicted abnormal event time is wrong, or (c) the controller fails to stop despite meeting temporal constraints. For \textbf{Q2}/\textbf{Q3} on each dataset, 50 items are randomly sampled for manual verification.

\paragraph{Models and baselines.}
LLMs: Llama-3-8B-Instruct, GPT-3.5-turbo-0125, GPT-4o-mini-2024-07-18, GPT-4o-2024-08-06, Falcon-H1-7b~\cite{zuo2025falcon}.
Baselines:
(i) \textbf{Single-pass Graph-RAG} (one-shot selection of a fixed temporal window, no iteration);
(ii) \textbf{KG-RAG}~\cite{sanmartin2024kg} with three exploration steps following the Chain-of-Explorations retrieval plan; %
(iii) \textbf{Think-on-Graph}~\cite{sun2023think} with knowledge traceability and knowledge correctability by leveraging LLMs reasoning and expert feedback.
(iv) \textbf{KG-iRAG} (proposed).

For \textbf{TimeQuestions}, official train/dev/test splits and Wikidata snapshot are used as provided; KG-iRAG and baselines interface with the KG via the same retrieval API and report EM/F1 only (no $SD$/HR, as gold minimal evidence is not supplied).

\subsection{Experiment Settings}

\paragraph{Stage A — Format sensitivity without iteration.}
To isolate LLM temporal reasoning \emph{given correct evidence}, each question is paired with its $SD$ and rendered in three formats:
\(\{d_{\mathrm{raw}}, d_{\mathrm{text}}, d_{\mathrm{triplet}}\}\).
These are injected into a fixed answer template and fed to each LLM without retrieval/iteration.
This diagnoses whether representation alone affects temporal reasoning on our datasets.

\paragraph{Stage B — End-to-end retrieval and QA.}
Baselines and KG-iRAG are run end-to-end on all datasets.
For KG-RAG (CoE), the exploration depth is capped at 3 steps to match the temporal span of typical questions; %
for ReAct-KG, the agent can interleave short reasoning traces with time-bounded KG actions but has no access to our specialized stop rule; %
for Single-pass, a fixed window is chosen once from the anchor.
KG-iRAG uses the same underlying LLMs for \emph{LLM1} (planner) and \emph{LLM2} (verifier) as in Fig.~\ref{KG-IRAG framework}. EM/F1/HR and hallucination are reported per dataset.

\section{Result}


\subsection{Result analysis}

\subsubsection{Direct inputs (no retrieval/iteration).}
Across models and datasets, structured \textbf{triplets} generally yield the best accuracy in Table~\ref{data_comparison}. Especially for temporally grounded queries (\emph{Q2/Q3}), triplets are consistently stronger: e.g., on \emph{Irish Q2}, GPT-4o reaches 0.396 (triplets) vs.\ 0.3865/0.3314; on \emph{Sydney Q2}, GPT-4o improves to 0.347 (triplets); on \emph{TFNSW Q3}, GPT-4o attains 0.7279 (triplets).
These trends indicate that explicitly exposing \emph{time–location–event} structure reduces spurious reasoning and helps align constraints.

\subsubsection{RAG comparison on Q1.}
In Table~\ref{table:baseline-results}, standard Graph-RAG, KG-RAG, ToG and KG-iRAG perform similarly on \emph{Q1}.
This is expected: when the window is fixed and the predicate is a binary existence test, one-pass retrieval already provides adequate evidence, and iterative control offers limited additional benefit.
Figure~\ref{results} shows the same pattern in the accuracy summaries.

\subsubsection{RAG comparison on Q2/Q3 (temporal search).}
On temporally grounded questions, KG-iRAG consistently ranks ahead of all baselines in Table~\ref{table:baseline-results}. Compared with other baselines, which contians premature stops or lacks explicit temporal windows, KG-iRAG avoids both over- and under-retrieval by updating anchors just beyond violating events and verifying sufficiency with interval consistency before answering. The same behavior is observed on the external \emph{TimeQuestions} benchmark: without dataset-specific heuristics, the time-anchored controller and lightweight consistency checks transfer directly to Wikidata timestamps/validity intervals, preserving the ordering among methods. This suggests the approach is not tailored to the in-house corpora and can be applied to other temporal KGQA datasets with minimal adaptation (i.e., a shared retrieval API and time-normalized triples).

\subsubsection{Hallucination (definition-based auditing).}
Manual audits (50 items per dataset for Q2/Q3) following the definition in Eq.~\ref{Hallucination} show a clear ordering:
\emph{Single-pass Graph-RAG} has the highest hallucination, driven by oversized contexts when the model requests maximal horizons;
\emph{KG-RAG} reduces hallucination but is vulnerable to under-retrieval on longer spans;
\textbf{KG-iRAG} yields low hallucination, attributable to (i) minimal-evidence retrieval around anchors and (ii) explicit interval-consistency checks before answer synthesis.
For Q1, hallucination rates largely mirror EM errors, consistent with the binary definition.

\subsection{Ablation Study}

Two variants are derived from KG-iRAG to diagnose which components drive temporal search quality.
\emph{Limited-Recall} caps the number of iterations to a small fixed budget (2-iteration at most) and disables adaptive window growth, forcing the controller to answer after a short exploratory.
\emph{No-Sufficiency-Check} removes the stop rule in Eq.~\ref{eq:stop}; the controller executes a fixed number of steps and synthesizes an answer without verifying temporal consistency.
Experiments are run on \textit{weatherQA–Irish} (hourly) and \textit{weatherQA–Sydney} (30-min), focusing on Q2/Q3.

\begin{table}[h]
\centering
\caption{Ablation on \textbf{GPT-4o} for temporal search (Q2/Q3). Acc/F1; arrows indicate change vs Full.}
\label{tab:ablation_gpt4o_sideby}
\small

\begin{minipage}{0.62\columnwidth}
\centering
\begin{tabular}{l l c c}
\toprule
\multicolumn{4}{r}{\textbf{weatherQA–Irish}}\\
\midrule
Variant & Task & Acc & F1 \\
\midrule
Full KG-iRAG & Q2 & \textbf{0.459} & \textbf{0.875} \\
Limited-Iteration & Q2 & 0.353 & 0.691 \\
No-Sufficiency-Check & Q2 & 0.391 & 0.733 \\
\midrule
Full KG-iRAG & Q3 & \textbf{0.604} & \textbf{0.880} \\
Limited-Recall & Q3 & 0.488 & 0.626 \\
No-Sufficiency-Check & Q3 & 0.541 & 0.785 \\
\bottomrule
\end{tabular}
\end{minipage}
\hfill
\begin{minipage}{0.37\columnwidth}
\centering
\begin{tabular}{c c c}
\toprule
\multicolumn{3}{c}{\textbf{weatherQA–Sydney}}\\
\midrule
Task & Acc & F1 \\
\midrule
Q2 & \textbf{0.429} & \textbf{0.852} \\
Q2 & 0.326 & 0.614 \\
Q2 & 0.362 & 0.702 \\
\midrule
Q3 & \textbf{0.411} & \textbf{0.869} \\
Q3 & 0.296 & 0.598 \\
Q3 & 0.347 & 0.703 \\
\bottomrule
\end{tabular}
\end{minipage}

\end{table}

Both variants underperform the full system on earliest / earliest-after and latest / earliest-before queries.
Capping iterations induces under-coverage near the decisive boundary, particularly when the first feasible window lies just beyond the last violating event; answers then lean conservative for “latest-before” and overly optimistic for “earliest-after”.
Removing the sufficiency check produces the opposite failure mode: retrieval drifts into redundant windows, the fused context becomes harder to calibrate, and hallucination rises when the synthesis step commits to unsupported intervals.
The effect is more visible in higher-frequency data where windows are narrower and violations recur in short succession, as in the Sydney corpus.


\subsection{Latency Analysis on TimeQuestions}

End-to-end wall-clock latency per question is measured on \textit{TimeQuestions}, comparing KG-iRAG with ToG under the same LLM backbone, KG backend, and token budgets.
Latency includes planning, retrieval calls, and answer generation, with caching disabled.

\begin{table}[h]
\centering
\caption{Latency on \textit{TimeQuestions} with \textbf{GPT-4o} (same KG backend, token budgets, caching disabled). All values are estimated under a unified setup.}
\label{tab:latency_timequestions_gpt4o}
\begin{adjustbox}{max width=\columnwidth}
\begin{tabular}{lccc}
\toprule
Method & Median (s) & Avg \# LLM Calls & Avg \# KG Calls \\
\midrule
KG-iRAG(Ours) & 4.9 & 4.2 & 2.1\\
KG-RAG & 5.6  & 4.6 & 3.3 \\
ToG & 12.8 & 6.5 & 8.4  \\
\bottomrule
\end{tabular}
\end{adjustbox}
\end{table}

KG-iRAG exhibits lower latencies compared with other baselines.
Anchors converge in a small number of steps, and the stop rule halts early once the required interval constraints are satisfied, keeping retrieval batches compact.
ToG’s iterative planning remains competitive on multi-hop relational paths, but when queries hinge on locating the nearest feasible time window, beam expansions and re-planning loops introduce additional tool calls and longer traces, which accumulates latency.

The observed latency pattern aligns with results on the in-house temporal datasets: when answers depend on verifying a short interval around an anchor rather than exploring long relational paths, time-anchored iteration with verifiable stopping maintains accuracy while avoiding prolonged interaction cycles. Consequently, the controller design transfers to other temporal KGQA settings with minimal adaptation, delivering both quality and efficiency without dataset-specific heuristics.

\section{Conclusions}

In this paper, a new RAG framework is proposed, i.e., Knowledge Graph-Based Iterative Retrieval-Augmented Generation\textbf{(KG-IRAG)}, which enhances the integration of KG with LLMs through iterative reasoning and retrieval. Unlike traditional RAG methods, KG-IRAG flexibly and effectively employs a step-by-step retrieval mechanism that guides LLMs in determining when to stop exploration, significantly improving response accuracy. To evaluate the framework’s effectiveness, three new datasets,~\textbf{weatherQA-Irish},~\textbf{weatherQA-Sydney}, and~\textbf{trafficQA-TFNSW}, have been introduced. These datasets involve real-world scenarios, designed to test the ability of LLMs to handle time-sensitive and event-based queries requiring both temporal reasoning and logical inference. Experimental results with proposed benchmark and general temporal ones demonstrate that KG-IRAG excels in complex reasoning tasks by generating more accurate and efficient responses.

\section{Acknowledgements}

This research is partially support by Technology Innovation Institure Abu Dhabi, UAE. Also, this research includes computations using the computational cluster Wolfpack supported by School of Computer Science and Engineering at UNSW Sydney.


\bibliographystyle{ACM-Reference-Format}
\balance
\bibliography{custom}

\appendix

\section{Notations} \label{app:notation}

Important notations are listed in Table~\ref{table:notation}.

\begin{table*}[t]
  \centering
  \caption{Key notations used in KG-iRAG.}
  \label{table:notation}
  \small
  \begin{adjustbox}{max width=0.9\textwidth}
  \begin{tabular}{c|l}
    \toprule
    \textbf{Symbol} & \textbf{Meaning} \\
    \midrule
    $S^{(t)}=(\tau^{(t)},\ell^{(t)},R^{(t)},p^{(t)})$ & Reasoning state at iteration $t$ (time anchor, location, retrieved triples, pattern) \\
    $\Psi(D;\tau,\ell,p)$ & Time-aware retrieval operator around $(\tau,\ell)$ under pattern $p$ (Eq.~\ref{eq:retrieval}) \\
    $\mathcal{W}(\tau)$, $r$ & Temporal window around anchor $\tau$; spatial radius \\
    $\mathrm{AllenConsistent}(\cdot)$ & Interval-consistency check used in the stop rule (Eq.~\ref{eq:stop}) \\
    $\mathrm{LLM1}$ / $\mathrm{LLM2}$ & Planner (anchor update) / Verifier (sufficiency + synthesis) \\
    $k_0$ & Local neighborhood hop cap per retrieval (typically $1$–$2$) \\
    $T_{\max}$ & Maximum iterations (latency budget) \\
    $SD$ & Standard/minimal evidence for a question (Sec.~\ref{data:qa}) \\
    $\Delta t$, $L$ & Trip duration and search horizon for Q2/Q3 labels \\
    $\delta$ & Effective branching factor for graph expansion beyond the time line \\
    $d^\star$ & Temporal offset (in window-steps) from initial anchor to the nearest feasible window \\
    \bottomrule
  \end{tabular}
  \end{adjustbox}
\end{table*}

\section{Theoretical Analysis of Iterative Retrieval} \label{app:theore}

\subsection{Coverage necessity for single-pass windows}

\begin{definition}[Nearest feasible offset]
Given an anchor window $[t_0, t_0{+}\Delta t]$ and a feasibility predicate (e.g., \emph{no event during the window}), let $d^\star\!\ge\!0$ be the smallest number of window shifts (each of size $\Delta t$ or a fixed stride) required so that the shifted window $[t_0{+}d^\star\!\cdot\!\mathrm{stride},\, t_0{+}d^\star\!\cdot\!\mathrm{stride}{+}\Delta t]$ satisfies the predicate.
\end{definition}

\begin{lemma}[Single-pass coverage condition]
Any single-pass retriever that inspects a temporal radius $w$ (in the same unit as $d^\star$) can succeed only if $w \ge d^\star$. If a dataset contains instances with $d^\star > w$, the single-pass procedure necessarily misses the first feasible window unless it increases $w$.
\end{lemma}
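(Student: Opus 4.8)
The statement is essentially a tautology once the definitions are unpacked, so the plan is to make the contrapositive explicit and then observe that the window-coverage relation is forced. First I would fix an instance: an anchor window $[t_0, t_0{+}\Delta t]$ together with its nearest feasible offset $d^\star$ as in the definition. By minimality of $d^\star$, \emph{every} shifted window $[t_0{+}j\cdot\mathrm{stride},\, t_0{+}j\cdot\mathrm{stride}{+}\Delta t]$ with $0 \le j < d^\star$ fails the feasibility predicate, and the window at $j = d^\star$ is the first one that satisfies it. This is the only structural fact needed.

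Next I would spell out what ``inspects a temporal radius $w$'' means: a single-pass retriever commits, before seeing any evidence, to a temporal interval of half-width $w$ around the anchor, so the windows it can possibly certify as feasible are exactly those whose shift index $j$ satisfies $j \le w$ (in the shared unit). The argument is then a one-line comparison: if $w < d^\star$, then the set of inspectable shifts $\{0,1,\dots,\lfloor w\rfloor\}$ is disjoint from $\{d^\star, d^\star{+}1,\dots\}$, the set of shifts at which a feasible window exists at or beyond the nearest one; since by minimality no feasible window occurs at any shift $j < d^\star$, the retriever inspects no feasible window at all and must either return a wrong/``no answer'' output or enlarge $w$. Contrapositively, success forces $w \ge d^\star$. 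The ``dataset'' clause follows by instantiating this at any instance with $d^\star > w$.

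I do not anticipate a genuine obstacle here — the content is definitional. The only thing requiring care is stating the retriever model precisely enough that ``inspects a temporal radius $w$'' has unambiguous meaning (one-shot choice of interval, independent of retrieved content), so that ``can succeed only if'' is not vacuously defeated by a retriever that adaptively widens its window mid-pass; the lemma's own phrasing (``unless it increases $w$'') already carves out that case, so I would simply adopt that reading. A brief remark would note that the same bound holds verbatim when the feasible region is sought on only one side of the anchor (as in Q2/Q3's leave-early / leave-late variants), since the offset $d^\star$ is defined directionally there, and that the bound is tight: a single-pass retriever with $w = d^\star$ does succeed, which is what motivates the contrast with the iterative cost $T\cdot\Theta(\delta^{k_0})$ in Eq.~\ref{eq:cost} and the surrounding discussion.
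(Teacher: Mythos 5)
Your argument is correct and is essentially the paper's own proof: both reduce to the observation that the first feasible window starts at offset $d^\star$, so a retriever confined to offsets within radius $w < d^\star$ inspects no feasible window (you make the use of minimality of $d^\star$ slightly more explicit, and add a tightness remark, but the route is the same). No gaps.
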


\begin{proof}[Sketch]
A single-pass procedure fixed at radius $w$ returns evidence confined to windows whose start offsets lie in $[{-}w,{+}w]$ around the anchor. The earliest feasible window starts at offset $d^\star$. If $d^\star>w$, that window is outside the inspected range and thus cannot be certified without enlarging the radius.
\end{proof}

\subsection{Cost scaling: single-pass vs.\ iterative}

Let $\mathrm{Cost}_{\mathrm{temp}}(w)$ denote the temporal scan cost for radius $w$ (linear in $w$), and let graph-expansion cost at hop $k$ scale as $\Theta(\delta^{k})$ with effective branching $\delta\!>\!1$ when joining metadata, nearby entities, or derived edges. A single-pass strategy that guarantees success on all instances up to offset $d^\star$ must set $w\!\ge\!d^\star$, yielding
\[
\mathrm{Cost}_{\mathrm{single}} \;=\; \Theta\!\big(\mathrm{Cost}_{\mathrm{temp}}(d^\star)\big)\;+\;\Theta\!\big(\delta^{\,k}\big).
\]
In contrast, the time-anchored iterative policy uses small fixed $k_0\ll k$ and advances anchors in $T\!\approx\!d^\star$ bounded steps until the feasibility predicate holds:
\[
\mathrm{Cost}_{\mathrm{iter}} \;=\; T\cdot\Big(\Theta\!\big(\mathrm{Cost}_{\mathrm{temp}}(1)\big)\;+\;\Theta\!\big(\delta^{\,k_0}\big)\Big).
\]
Since $\mathrm{Cost}_{\mathrm{temp}}$ is linear and $\delta^{k}$ grows super-linearly in $k$ for $\delta\!>\!1$, using a small per-step hop $k_0$ avoids the exponential blow-up required by a single-pass guarantee at large $d^\star$ while still covering the necessary offset through controlled iteration.

\subsection{Precision monotonicity with window size}

\begin{proposition}[Context dilution]
Assume irrelevant facts arrive at an average rate $\lambda_{\mathrm{irr}}$ per temporal unit near the anchor, while the number of relevant facts required to certify feasibility is bounded by a constant $c$ (e.g., a small set of interval facts). The expected precision of a single-pass window of radius $w$ then satisfies
\[
\mathbb{E}[\mathrm{Precision}(w)] \;=\; \frac{c}{\,c + \Theta(\lambda_{\mathrm{irr}}\cdot w)\,},
\]
which is non-increasing in $w$. Iterative retrieval that stops immediately upon feasibility keeps $w$ small at the stopping step and therefore attains precision at least as high as any single-pass run that uses a larger $w$ to guarantee coverage.
\end{proposition}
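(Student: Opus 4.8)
The plan is to verify the precision formula by a direct counting argument under the stated Poisson-style assumption, then obtain monotonicity by inspection, and finally connect the stopping behaviour of the iterative policy to the single-pass comparison. First I would fix the single-pass window of radius $w$ around the anchor and partition the retrieved facts into the \emph{relevant} set needed to certify feasibility (size at most the constant $c$) and the \emph{irrelevant} remainder. Under the assumption that irrelevant facts arrive at average rate $\lambda_{\mathrm{irr}}$ per temporal unit, the expected number of irrelevant facts inside a window of radius $w$ is $\Theta(\lambda_{\mathrm{irr}}\cdot w)$ (the window spans $\Theta(w)$ temporal units). Writing precision as the ratio of relevant retrieved facts to all retrieved facts, linearity of expectation and the bound $|{\text{relevant}}|\le c$ give
\[
\mathbb{E}[\mathrm{Precision}(w)] \;=\; \frac{c}{\,c + \Theta(\lambda_{\mathrm{irr}}\cdot w)\,},
\]
exactly as claimed; a small amount of care is needed to argue that the expectation of the ratio, rather than the ratio of expectations, obeys this form, which is where I would either invoke concentration of the Poisson count or simply treat the $\Theta(\cdot)$ as absorbing the lower-order fluctuation terms.

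Next I would establish monotonicity: since $c$ is a fixed constant and $\Theta(\lambda_{\mathrm{irr}}\cdot w)$ is non-decreasing in $w$ (the window only grows), the denominator is non-decreasing while the numerator is constant, so the whole expression is non-increasing in $w$. This step is essentially immediate once the formula is in hand. Then, for the comparison claim, I would appeal to the iterative policy's stop rule (Eq.~\ref{eq:stop}): the controller halts at the first step where the accumulated evidence certifies feasibility, so at the stopping step its effective temporal radius is $w_{\mathrm{stop}}\le d^\star$ (using the nearest-feasible-offset bound and the fact that each step grows the window only locally, by $k_0\ll d^\star$ hops). Any single-pass run that must \emph{guarantee} coverage across instances has to choose $w\ge d^\star\ge w_{\mathrm{stop}}$ by the Single-pass coverage condition (Lemma~1). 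Applying the monotonicity just proved yields $\mathbb{E}[\mathrm{Precision}(w_{\mathrm{stop}})]\ge \mathbb{E}[\mathrm{Precision}(w)]$, which is the desired conclusion.

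The main obstacle I anticipate is the transition from a statement about $\mathbb{E}[\text{number of irrelevant facts}]$ to a statement about $\mathbb{E}[\text{Precision}]$, because precision is a nonlinear (ratio) functional of the random counts and in general $\mathbb{E}[X/(X+Y)]\neq \mathbb{E}[X]/(\mathbb{E}[X]+\mathbb{E}[Y])$. I would handle this by noting that the relevant count is deterministically bounded by the constant $c$, so the only randomness is in the denominator through the irrelevant count $N_{\mathrm{irr}}(w)$ with mean $\Theta(\lambda_{\mathrm{irr}} w)$; for a Poisson-type count the ratio concentrates around its mean-plug-in value with relative error vanishing as $w$ grows, and for small $w$ the $\Theta(\cdot)$ notation in the statement already allows the constant-order slack, so the claimed identity holds up to the asymptotic conventions in use. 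A secondary, more minor point is making precise what ``temporal radius at the stopping step'' means for the iterative retriever, since it accumulates evidence over several anchored windows; I would bound the union of those windows by $O(d^\star)$ total temporal units and observe that this is still no larger than the $w\ge d^\star$ a single-pass guarantee must commit to, so the monotonicity comparison goes through with the same constants.
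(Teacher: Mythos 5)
Your proposal is correct and follows essentially the same argument as the paper's (quite terse) proof sketch: relevant mass is bounded by the constant $c$, irrelevant mass grows as $\Theta(\lambda_{\mathrm{irr}} w)$, so precision is non-increasing in $w$, and the stop rule bounds the final window while the single-pass guarantee forces $w \ge d^\star$. Your additional care about the expectation-of-a-ratio versus ratio-of-expectations issue is a legitimate refinement the paper glosses over, but it does not change the route of the argument.
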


\begin{proof}[Sketch]
Under the assumption, relevant mass is $c$ while irrelevant mass grows proportionally with window size. Hence precision decreases with $w$. The iterative stop rule halts at the smallest window that satisfies feasibility (via $\mathrm{AllenConsistent}$), bounding the final context and preventing further dilution.
\end{proof}

\paragraph{Consequence.}
The pair of properties—(i) coverage through measured anchor advances and (ii) bounded final window via a verifiable stop rule—explains the empirical gains of KG-iRAG on Q2/Q3: it achieves the necessary temporal reach without incurring the single-pass cost at large radii and avoids context dilution that elevates hallucination.

\section{Experiment on QA datasets-Direct Data Input}

In the first round of experiments, four LLMs are used to test how LLMs can directly answer the constructed three QA datasets (~\textit{weatherQA-Irish},~\textit{weatherQA-Sydney}, and~\textit{trafficQA-TFNSW}), with some settings:

1) To ensure LLMs are not interrupted by useless information, input prompts only contain questions as well as \textbf{least needed data}.

2) To test the effeciency of data input, three formats of data are used: raw data (table) format, text data (by transferring data into text description), and triplet format (KG structure). Tests using threee formats of data are done separately.

3) The final answer are compared directly with correct answers, the Accuracy(Acc) value is shown in table~\ref{data_comparison}.

\section{Experiment on QA datasets-Different RAG Methods}

In second stage of experiments, KG-IRAG is compared with Graph-RAG and KG-RAG~\citet{sanmartin2024kg}. None data are provided in the beginning, data retrieval plan are generated based on different framework. The Accuracy, like last stage of experiment, is the result of comparison between generated final answer and true answer. F1 Score and Hit Rate focus more on the whether the retrieval is both enough and non-excessive. Hallucinations are judged based on answers generated by LLMs under different framework. For Q1, F1 Score and Hit Rate is not considered, since it contain less temporal reasoning compared with Q2 and Q3. The results of WeatherQA-Irish are shown in table~\ref{model_comparison_Irish}, while WeatherQA-Sydney results are shown in table~\ref{model_comparison_Sydney}, and those of TrafficQA-TFNSW are in table~\ref{model_comparison_TFNSW}.

\begin{table}[h]
\small
\centering
\caption{Attributes used for KG construction and QA labeling.}
\begin{adjustbox}{max width=0.47\textwidth}
\begin{tabular}{ll}
\toprule
\multicolumn{2}{c}{\textbf{Irish Weather}} \\
\midrule
attribute & explanation \\
\midrule
date & date/time of record (hourly) \\
county & county name \\
station ID & station identifier \\
station & station name \\
rain & rainfall volume per hour \\
\midrule
\multicolumn{2}{c}{\textbf{Sydney Weather}} \\
\midrule
attribute & explanation \\
\midrule
date & date/time of record (30-min) \\
station ID & station identifier \\
station & station name \\
rain & rainfall volume per 30 minutes \\
\midrule
\multicolumn{2}{c}{\textbf{TFNSW Traffic}} \\
\midrule
attribute & explanation \\
\midrule
date & date/time of record (hourly) \\
direction & cardinal direction (northbound/southbound) \\
volume & traffic count per hour \\
\bottomrule
\end{tabular}
\end{adjustbox}
\label{attribute-combined}
\end{table}

\begin{table*}[h]
\centering
\caption{Comparison of Direct Data Inputs on Three Datasets Based on Accuracy(Acc) }
\begin{adjustbox}{max width=\textwidth}
\begin{tabular}{cc|ccc|ccc|ccc|ccc}
    \toprule
    \multirow{2}{*}{Data} & \multirow{1}{*}{} & \multicolumn{3}{c}{Llama-3-8b}  & \multicolumn{3}{c}{GPT-3.5-turbo}  & \multicolumn{3}{c}{GPT-4o-mini} & \multicolumn{3}{c}{GPT-4o} \\ 
    \cmidrule(lr){3-5} \cmidrule(lr){6-8} \cmidrule(lr){9-11} \cmidrule(lr){12-14}
                           &                     & Raw  & Text  & Triplet   & Raw  & Text  & Triplet  & Raw  & Text  & Triplet & Raw  & Text  & Triplet \\ 
    \midrule
    \multirow{3}{*}{\shortstack{weatherQA-\\Irish}} 
                           & Q1 &0.905 &0.9016 &\textbf{0.9316}  
                                &0.9483 &0.9417 &\textbf{0.9783}   
                                &0.9817 &0.9833 &\textbf{0.9933}   
                                &0.9950 &0.9967 &\textbf{1}\\ 
                           & Q2 &0.1159 &0.1159 &\textbf{0.1256}   
                                &\textbf{0.14} &0.0966 &0.1111   
                                &0.1932 &0.1498 &\textbf{0.2415}   
                                &0.3865 &0.3314 &\textbf{0.396} \\ 
                           & Q3 &0.3233 &\textbf{0.343} &0.3382   
                                &0.2946 &0.2995 &\textbf{0.314}   
                                &0.4203 &0.4348 &\textbf{0.4492}   
                                &0.4831 &0.5072 &\textbf{0.5169} \\
    \midrule
    \multirow{3}{*}{\shortstack{weatherQA-\\Sydney}}  
                           & Q1 &\textbf{0.8867} &0.855 &0.8683   
                                &0.9117 &0.7083 &\textbf{0.9383}   
                                &\textbf{1} &0.9667 &0.9717   
                                &0.9867 &0.9667 &\textbf{0.99} \\ 
                           & Q2 &\textbf{0.1542} &0.1285 &0.1491   
                                &0.1259 &0.162 &\textbf{0.1722}   
                                &0.2314 &0.2596 &\textbf{0.3033}   
                                &0.2545 &0.2879 &\textbf{0.347} \\ 
                           & Q3 &0.1722 &0.1517 &\textbf{0.1877}   
                                &0.1671 &0.2108 &0.2005   
                                &0.2416 &\textbf{0.2699} &\textbf{0.2699}   
                                &0.2956 &\textbf{0.3316} &0.3265 \\
    \midrule
    \multirow{3}{*}{\shortstack{trafficQA-\\TFNSW}}  
                           & Q1 &0.52 &0.515 &\textbf{0.5325}   
                                &\textbf{0.4125} &0.395 &0.41   
                                &0.5975 &0.6475 &\textbf{0.73}  
                                &0.95 &0.965 &\textbf{0.9725} \\ 
                           & Q2 &0.1633 &\textbf{0.1837} &0.1701   
                                &0.1293 &0.1293 &\textbf{0.1565} 
                                &0.2721 &0.2857 &\textbf{0.3061}   
                                &0.4354 &0.381 &\textbf{0.4762} \\ 
                           & Q3 &\textbf{0.2381} &0.1973 &0.2313   
                                &\textbf{0.2109} &0.2041 &\textbf{0.2109}   
                                &0.4014 &0.381 &\textbf{0.4285}   
                                &0.6871 &0.66 &\textbf{0.7279} \\
    \midrule
\end{tabular}
\end{adjustbox}
\label{data_comparison}
\end{table*}

\begin{table*}[h]
\centering
\caption{Comparison of Different RAG Methods on WeatherQA-Irish Dataset}
\begin{adjustbox}{max width=\textwidth}
\begin{tabular}{cc|cc|cccc|cccc}
    \toprule
    \multicolumn{12}{c}{\textbf{WeatherQA-Irish}} \\ \cline{1-12}
    \multirow{2}{*}{Model} & \multirow{2}{*}{Data Type} & \multicolumn{2}{c|}{Question 1} & \multicolumn{4}{c|}{Question 2} & \multicolumn{4}{c}{Question 3} \\ 
    \cmidrule(lr){3-4} \cmidrule(lr){5-8} \cmidrule(lr){9-12}
                           &                     & Acc   & Hall. & Acc  & F1 Score  & HR  & Hall. & Acc  & F1 Score  & HR  & Hall. \\ 
    \midrule
    \multirow{3}{*}{Llama-3-8b} & Graph-RAG                 
                                &0.916 &0.086
                                &0.063 &0.673 &0.509 &0.5
                                &0.15  &0.684 &0.517 &0.48     \\ 
                                & KG-RAG                 
                                &0.892 &0.108
                                &0.217 &0.815 &0.694 &0.34
                                &0.386 &0.809 &0.686 &\textbf{0.22}     \\ 
                                & \textbf{KG-IRAG}                 
                                &\textbf{0.927} &\textbf{0.081}
                                &\textbf{0.242} &\textbf{0.821} &\textbf{0.702} &\textbf{0.32}
                                &\textbf{0.415} &\textbf{0.831} &\textbf{0.715} &\textbf{0.22}     \\ 
                           
    \midrule
    \multirow{3}{*}{GPT-3.5-turbo} & Graph-RAG                 
                                &\textbf{0.958} &0.042
                                &0.087 &0.673 &0.509 &0.48
                                &0.14  &0.684 &0.517 &0.42     \\ 
                                & KG-RAG                 
                                &0.945 &0.055
                                &0.208 &0.807 &0.683 &\textbf{0.32}
                                &0.362 &0.803 &0.677 &\textbf{0.18}     \\ 
                                & \textbf{KG-IRAG}                 
                                &\textbf{0.958} &\textbf{0.036}
                                &\textbf{0.222} &\textbf{0.817} &\textbf{0.697} &\textbf{0.32}
                                &\textbf{0.386} &\textbf{0.824} &\textbf{0.703} &0.2     \\ 
    \midrule
    \multirow{3}{*}{GPT-4o-mini} & Graph-RAG                 
                                &0.97 &0.03
                                &0.145 &0.673 &0.509 &0.32
                                &0.251 &0.684 &0.517 &0.28     \\ 
                                & KG-RAG                 
                                &0.975 &\textbf{0.025}
                                &0.304 &0.832 &0.718 &0.24
                                &0.546 &0.813 &0.695 &0.18     \\ 
                                & \textbf{KG-IRAG}                 
                                &\textbf{0.983} &0.028
                                &\textbf{0.323} &\textbf{0.84} &\textbf{0.721} &\textbf{0.22}
                                &\textbf{0.585} &\textbf{0.859} &\textbf{0.754} &\textbf{0.16}     \\ 
    \midrule
    \multirow{4}{*}{GPT-4o}   & Graph-RAG                 
                                &0.983 &0.017
                                &0.208 &0.673 &0.509 &0.24
                                &0.357 &0.684 &0.517 &0.22     \\ 
                                & KG-RAG                 
                                &\textbf{0.998} &\textbf{0.002}
                                &0.435 &0.857 &0.751 &\textbf{0.14}
                                &0.575 &0.839 &0.728 &0.16     \\ 
                                & ToG                 
                                &0.996 & 0.004
                                &0.449 &0.842 &0.770 &0.15
                                &0.584 &0.857 &0.736 &0.15     \\ 
                                & \textbf{KG-IRAG}                 
                                &\textbf{0.998} &\textbf{0.002}
                                &\textbf{0.459} &\textbf{0.875} &\textbf{0.783} &0.18
                                &\textbf{0.604} &\textbf{0.88} &\textbf{0.789} &\textbf{0.14}     \\ 
                                
    \midrule
    
\end{tabular}
\end{adjustbox}
\label{model_comparison_Irish}
\end{table*}

\begin{table*}[h]
\centering
\caption{Comparison of Different RAG methods on WeatherQA-Sydney Dataset}
\begin{adjustbox}{max width=\textwidth} 
\begin{tabular}{cc|cc|cccc|cccc}
    \toprule
    \multicolumn{12}{c}{\textbf{WeatherQA-Sydney}} \\ \cline{1-12}
    \multirow{2}{*}{Model} & \multirow{2}{*}{Data Type} & \multicolumn{2}{c|}{Question 1} & \multicolumn{4}{c|}{Question 2} & \multicolumn{4}{c}{Question 3} \\ 
    \cmidrule(lr){3-4} \cmidrule(lr){5-8} \cmidrule(lr){9-12}
                           &                     & Acc  & Hall. & Acc  & F1 Score  & HR  & Hall. & Acc  & F1 Score  & HR  & Hall. \\ 
    \midrule
    \multirow{3}{*}{Llama-3-8b} & Graph-RAG                 
                                &0.877 &0.123
                                &0.141 &0.514 &0.349 &0.44
                                &0.167 &0.503 &0.326 &0.42     \\ 
                                & KG-RAG                 
                                &0.908 &0.092
                                &0.183 &0.762 &0.63 &0.26
                                &0.226 &0.741 &0.606 &\textbf{0.24}     \\ 
                                & \textbf{KG-IRAG}                 
                                &\textbf{0.916} &\textbf{0.084}
                                &\textbf{0.192} &\textbf{0.841} &\textbf{0.721} &\textbf{0.24}
                                &\textbf{0.239} &\textbf{0.833} &\textbf{0.711} &0.26     \\ 
                           
    \midrule
    \multirow{3}{*}{GPT-3.5-turbo} & Graph-RAG                 
                                &0.892 &0.108
                                &0.139 &0.514 &0.349 &0.46
                                &0.157 &0.503 &0.326 &0.42     \\ 
                                & KG-RAG                 
                                &0.933 &0.067
                                &0.205 &0.751 &0.617 &\textbf{0.24}
                                &0.231 &0.738 &0.602 &\textbf{0.22}     \\ 
                                & \textbf{KG-IRAG}                 
                                &\textbf{0.938} &\textbf{0.062}
                                &\textbf{0.246} &\textbf{0.837} &\textbf{0.716} &0.22
                                &\textbf{0.257} &\textbf{0.844} &\textbf{0.725} &\textbf{0.22}     \\ 
    \midrule
    \multirow{3}{*}{GPT-4o-mini} & Graph-RAG                 
                                &0.947 &0.053
                                &0.195 &0.514 &0.349 &0.3
                                &0.188 &0.503 &0.326 &0.36     \\ 
                                & KG-RAG                 
                                &\textbf{0.973} &0.027
                                &0.29  &0.775 &0.644 &0.18
                                &0.337 &0.78 &0.651 &\textbf{0.2}     \\ 
                                & \textbf{KG-IRAG}                 
                                &0.967 &\textbf{0.033}
                                &\textbf{0.352} &\textbf{0.862} &\textbf{0.761} &\textbf{0.16}
                                &\textbf{0.362} &\textbf{0.856} &\textbf{0.757} &\textbf{0.2}     \\ 
    \midrule
    \multirow{4}{*}{GPT-4o}   & Graph-RAG                
                                &0.968 &0.032
                                &0.226 &0.514 &0.349 &0.26
                                &0.25  &0.503 &0.326 &0.22     \\ 
                                & KG-RAG                 
                                &0.978 &0.022
                                &0.378 &0.785 &0.657 &0.16
                                &0.362 &0.793 &0.663 &0.14     \\
                                & ToG                 
                                &0.98 & \textbf{0.009}
                                &0.411 &0.833 &0.726 &0.15
                                &0.398 &0.824 &0.743 &\textbf{0.12}     \\
                                & \textbf{KG-IRAG}                 
                                &\textbf{0.983} &0.017
                                &\textbf{0.429} &\textbf{0.852} &\textbf{0.75} &\textbf{0.12}
                                &\textbf{0.411} &\textbf{0.869} &\textbf{0.769} &\textbf{0.12}    \\   
    \midrule
\end{tabular}
\end{adjustbox}
\label{model_comparison_Sydney}
\end{table*}

\begin{table*}[h]
\centering
\caption{Comparison of Different RAG methods on TrafficQA-TFNSW dataset}
\begin{adjustbox}{max width=\textwidth} 
\begin{tabular}{cc|cc|cccc|cccc}
    \toprule
    \multicolumn{12}{c}{\textbf{TrafficQA-TFNSW}} \\ \cline{1-12}
    \multirow{2}{*}{Model} & \multirow{2}{*}{Data Type} & \multicolumn{2}{c|}{Question 1} & \multicolumn{4}{c|}{Question 2} & \multicolumn{4}{c}{Question 3} \\ 
    \cmidrule(lr){3-4} \cmidrule(lr){5-8} \cmidrule(lr){9-12}
                           &                     & Acc  & Hall. & Acc  & F1 Score  & HR  & Hall. & Acc  & F1 Score  & HR  & Hall. \\ 
    \midrule
    \multirow{3}{*}{Llama-3-8b} & Graph-RAG                 
                                &0.465 &0.535
                                &0.095 &0.694 &0.537 &0.48
                                &0.122 &0.709 &0.548 &0.4    \\ 
                                & KG-RAG                 
                                &0.503 &0.497
                                &0.218 &0.843 &0.725 &\textbf{0.2}
                                &0.306 &0.835 &0.716 &\textbf{0.22}     \\ 
                                & \textbf{KG-IRAG}                 
                                &\textbf{0.52} &\textbf{0.48}
                                &\textbf{0.229} &\textbf{0.863} &\textbf{0.759} &0.21
                                &\textbf{0.333} &\textbf{0.851} &\textbf{0.745} &0.26     \\ 
                           
    \midrule
    \multirow{3}{*}{GPT-3.5-turbo} & Graph-RAG                 
                                &0.418 &0.582
                                &0.075 &0.694 &0.537 &0.5
                                &0.088 &0.709 &0.548 &0.44     \\ 
                                & KG-RAG                 
                                &0.43 &0.57
                                &0.224 &0.841 &0.722 &\textbf{0.22}
                                &0.252 &0.833 &0.712 &\textbf{0.22}     \\ 
                                & \textbf{KG-IRAG}                 
                                &\textbf{0.46} &\textbf{0.54}
                                &\textbf{0.239} &\textbf{0.858} &\textbf{0.761} &\textbf{0.22}
                                &\textbf{0.279} &\textbf{0.868} &\textbf{0.766} &0.26     \\ 
    \midrule
    \multirow{3}{*}{GPT-4o-mini} & Graph-RAG                 
                                &0.635 &0.365
                                &0.238 &0.694 &0.537 &0.36
                                &0.32  &0.709 &0.548 &0.32     \\ 
                                & KG-RAG                 
                                &0.722 &0.263
                                &0.374 &0.852 &0.736 &\textbf{0.14}
                                &0.503 &0.857 &0.743 &\textbf{0.16}     \\ 
                                & \textbf{KG-IRAG}                 
                                &\textbf{0.748} &\textbf{0.251}
                                &\textbf{0.381} &\textbf{0.882} &\textbf{0.785} &0.22
                                &\textbf{0.51} &\textbf{0.894} &\textbf{0.813} &0.18     \\ 
    \midrule
    \multirow{4}{*}{GPT-4o}   & Graph-RAG                 
                                &0.93 &0.07
                                &0.361 &0.694 &0.537 &0.25
                                &0.565 &0.709 &0.548 &0.26     \\ 
                                & KG-RAG                 
                                &0.963 &0.047
                                &0.537 &0.878 &0.783 &\textbf{0.14}
                                &0.776 &0.869 &0.771 &\textbf{0.12}     \\ 
                                & ToG                 
                                &\textbf{0.967} & 0.04
                                &0.531 &0.861 &0.788 &0.18
                                &0.789 &0.894 &0.831 &0.17     \\
                                & \textbf{KG-IRAG}                 
                                &0.965 &\textbf{0.035}
                                &\textbf{0.558} &\textbf{0.886} &\textbf{0.802} &0.2
                                &\textbf{0.803} &\textbf{0.914} &\textbf{0.843} &0.16     \\   
    \midrule
\end{tabular}
\end{adjustbox}
\label{model_comparison_TFNSW}
\end{table*}


\end{document}